\documentclass[twoside]{article}
\usepackage{mor}

\usepackage{amsmath}
\usepackage[normalem]{ulem}
\usepackage{cases}

\received{July 17, 2010}                 %
\revised{September 7, 2010 and September 16, 2010} 
\pubyear{0x}                             %
\pubmonth{Xxxxxxx}                       %
\volume{xx}                              %
\issue{x}                                %
\pages{xxx--xxx}                         %
\firstpage{0xxx}                         %
\DOI{10.1287/moor.xxxx.xxxx}             %
\startpagenumber{1}                      %


\title{A Geometric Proof of Calibration}
\ShortTitle{A Geometric Proof of Calibration}
\ShortAuthors{Mannor and Stoltz}
\NumberOfAuthors{2}
\FirstAuthor{Shie Mannor}
\FirstAuthorAddress{Israel Institute of Technology (Technion), Haifa, Israel}
\FirstAuthorEmail{shie@ee.technion.ac.il}
\FirstAuthorURL{http://webee.technion.ac.il/people/shie/}
\SecondAuthor{Gilles Stoltz}
\SecondAuthorAddress{Ecole Normale Sup{\'e}rieure -- CNRS -- INRIA, Paris, France \& HEC Paris -- CNRS, Jouy-en-Josas, France}
\SecondAuthorEmail{gilles.stoltz@ens.fr}
\SecondAuthorURL{http://www.math.ens.fr/~stoltz}

\renewcommand{\epsilon}{\varepsilon}
\newcommand{\e}{\varepsilon}
\newcommand{\cA}{\mathcal{A}}
\newcommand{\cT}{\mathcal{T}}
\newcommand{\cB}{\mathcal{B}}
\newcommand{\cP}{\mathcal{P}}
\newcommand{\norm}[1][\,\cdot\,]{\ensuremath{\left\Arrowvert #1 \right\Arrowvert}}
\newcommand{\bp}{\mathbf{p}}
\newcommand{\bq}{\mathbf{q}}
\newcommand{\ind}{\mathbb{I}}
\newcommand{\uz}{\underline{0}}
\newcommand{\R}{\mathbb{R}}
\renewcommand{\leq}{\leqslant}
\renewcommand{\geq}{\geqslant}

\keywords{Calibration; Approachability; Convergence rates; Computational complexity}

\MSCcodes{Primary: 91A05, 91A20, 91A26; Secondary: 68Q32} 

\ORMScodes{Primary: Games/group decisions--noncooperative}

\begin{document}
\maketitle

\begin{abstract}
We provide yet another proof of the existence of calibrated forecasters; it has two merits.
First, it is valid for an arbitrary finite number of outcomes.
Second, it is short and simple and it follows from a direct application
of Blackwell's approachability theorem to carefully chosen vector-valued payoff function
and convex target set. Our proof captures the essence
of existing proofs based on approachability (e.g., the proof by Foster~\cite{Fo99} in case of binary outcomes)
and highlights the intrinsic connection between approachability and calibration. \\

Received: July 17, 2010; revised: September 7, 2010; final version: September 16, 2010
\end{abstract}

\section{Motivation.}

Foster~\cite{Fo99} stated that:
\begin{quote}
{\small
Over the past few years many proofs of the existence of calibration have been discovered. Each of the following
provides a different algorithm and proof of convergence: Foster and Vohra~\cite{FoVo91,FoVo98}, Hart~\cite{Ha95},
Fudenberg and Levine~\cite{FuLe99}, Hart and Mas-Colell~\cite{HaMa97}. Does the literature really need one more? Probably not.
}
\end{quote}
In spite of this, he argued, successfully, that his new proof of the existence of calibrated forecasters
in the case of binary outcomes, based on Blackwell's approachability theorem (Blackwell~\cite{Bla56}),
was shorter and more direct than most of the previous proofs.

In this paper, we consider the general case of finitely many outcomes and exhibit an even shorter (ten-line long)
proof of the existence of calibrated forecasters based on approachability. We show
therefore that calibration is a straightforward
consequence of approachability. As we realized by browsing on the web,
approachability and calibration are well-taught matters and we are confident that this new proof
will become a standard example in the list of direct applications of approachability (as is already the
case for the existence of no-regret forecasters).
Since calibration is a central tool in learning in games (see, e.g., Kakade and Foster~\cite{KakadeFoster08}) and in online learning
(see, e.g., Mannor, Tsitsiklis, and Yu~\cite{MannorTY09}), 
the simplicity of the proof and the guaranteed convergence rates open up new opportunities to
use calibration in practical learning algorithms.

Foster~\cite{Fo99} mentions that his approachability-based proof of the
existence of a calibrated forecaster was obtained by first considering a modification
of an intuitive forecaster already stated in Foster and Vohra~\cite{FoVo91} and then working out
the proof of its guarantees. We proceed the other way round and start directly from the statement of
Blackwell's approachability theorem for convex sets \cite[Theorem~3]{Bla56}
but, as a drawback, can only exhibit a forecaster which has to solve
a linear program at each step. Taking a closer look at Foster~\cite{Fo99},
one can see that we indeed capture the essence of his previous proof.
His algorithm is a clever modification, in the case of binary outcomes,
of the general approachability-based forecaster presented below; the former
has a nice, explicit, and simple statement.
\bigskip

We now recall the informal definition and consequences of calibration.
Consider a finite set of possible outcomes and suppose we obtain random forecasts
about future events; these forecasts are each given by probability
distributions over the outcomes. Now, such a sequence of forecasts
is called calibrated whenever it is consistent in hindsight, that is,
when, for all distributions $\bp$, the actual empirical distribution of the outcomes
on those rounds when the forecast was close to $\bp$ is also close to $\bp$.

Having a calibrated forecasting scheme is beneficial in several ways. On the one hand,
it allows some agent to choose the best responses to the predicted forecasts
or to consider other risk measures which might be more valuable than greedily choosing
the best action leading to highest reward. On the other hand,
calibrated forecasting rules enable multiple agents to converge to a reasonable joint play in some
situations. For instance, if all players use calibrated forecasts of other players' actions,
then the empirical distribution of action profiles converges to the set of correlated equilibria;
see Foster and Vohra~\cite{FoVo97}.
We refer to Sandroni, Smorodinsky, and Vohra~\cite{SaSmVo03} for further discussion on calibrated forecasting as well as its generalizations.

\section{Setup and formal definition of calibration.}
\label{sec:caldef}

We consider a finite set $\cA$ of outcomes, with cardinality denoted by $A$ and
denote by $\cP = \Delta(\cA)$ the set of probability distributions over $\cA$.
We equip $\cP$, which can be considered a subset of $\R^{A}$,
with some\footnote{The precise nature of this norm, e.g., $\ell^1$, Euclidian $\ell^2$,
or $\ell^{\infty}$ supremum norm, is irrelevant at this stage, since all norms
are equivalent on finite-dimensional spaces.} norm $\norm$, to be referred to as
the calibration norm.
In particular, the Dirac probability distribution on some outcome $a \in \cA$
will be referred to as $\delta_a$.

A forecaster plays a game against Nature. At each step, it outputs a probability
distribution $P_t \in \cP$ while Nature chooses simultaneously an outcome $a_t \in \cA$.
We make no assumption on Nature's strategy.

The goal of the forecaster is to ensure the following property, known as calibration:
for all strategies of Nature,
\begin{equation}
\label{def:cal}
\forall \, \e > 0, \ \ \forall \, \bp \in \cP, \qquad \quad
\lim_{T \to +\infty} \norm[\frac{1}{T} \sum_{t=1}^T \ind_{ \bigl\{ \norm[P_t - \bp] \leq \e \bigr\} }
\bigl( P_t - \delta_{a_t} \bigr)] = 0 \qquad \quad \mbox{a.s.}
\end{equation}
The {a.s.} statement accounts for randomized forecasters. (It was shown by Oakes~\cite{Oakes85} and Dawid~\cite{Dawid85} that
randomization is essential for calibration.) \\

The literature (e.g., Foster and Vohra~\cite{FoVo98}, 
Foster~\cite{Fo99}) essentially considers a less ambitious goal, at least in a first step: $\e$--calibration.
(We explain in Section~\ref{sec:cstr} how to get a calibrated forecaster from some
sequence of $\e$--calibrated forecasters with good properties.) Formally, given $\e > 0$, an $\e$--calibrated forecaster considers some
finite covering of $\cP$ by $N_{\e}$ balls of radius ${\e}$ and abides by the following constraints.
Denoting by $\bp_1,\ldots,\bp_{N_{\e}}$ the centers of the balls in the covering (they form what
will be referred to later on as an $\e$--grid),
the forecaster chooses only forecasts $P_t \in \bigl\{ \bp_1,\ldots,\bp_{N_{\e}} \bigr\}$.
We thus denote by $K_t$ the index in $\bigl\{ 1,\ldots,N_{\e} \bigr\}$ such that $P_t = \bp_{K_t}$.
The final condition to be satisfied is then that
for all strategies of Nature,
\begin{equation}
\label{def:epscal}
\limsup_{T \to +\infty} \ \ \sum_{k=1}^{N_\e} \norm[\frac{1}{T} \sum_{t=1}^T \ind_{ \{ K_t = k \} }
\bigl( \bp_k - \delta_{a_t} \bigr)] \,\, \leq \e \qquad \quad \mbox{a.s.}
\end{equation}

When the calibration norm is the $\ell^1$--norm $\norm_1$, the sum appearing in this criterion is usually
referred to as the $\ell^1$--calibration score (Foster~\cite{Fo99}). Another popular criterion
is the Brier score (Foster and Vohra~\cite{FoVo98}), which we consider in Section~\ref{sec:Brier};
it is bounded, up to a factor of 2, by the $\ell^1$--calibration score.

\section{A geometric construction of $\epsilon$--calibrated forecasters.}

In this section we prove our main result regarding the existence of an $\epsilon$--calibrated forecaster
based on approachability theory. We recall results approachability theory, provide the main result (Theorem~\ref{th:main}),
and then address the issue of computational complexity.

\subsection{Statement of Blackwell's approachability theorem.}

Consider a vector-valued game between two players,
with respective finite action sets $\mathcal{I}$ and $\mathcal{J}$.
We denote by $d$ the dimension of the reward vector.
The payoff function of the first player is given by
a mapping $m : \mathcal{I} \times \mathcal{J} \to \R^d$,
which is linearly extended to $\Delta(\mathcal{I}) \times \Delta(\mathcal{J})$,
the set of product-distributions over $\mathcal{I} \times \mathcal{J}$.

We denote by $I_1,I_2,\ldots$ and $J_1,J_2,\ldots$ the sequences of actions in $\mathcal{I}$
and $\mathcal{J}$ taken by each player (they are possibly given by randomized strategies).
Let $C \subset \R^d$ be some set. By definition, $C$ is approachable if there exists a strategy
for the first player such that for all strategies of the second player,
\[
\lim_{T \to \infty} \ \ \ \inf_{c \in C} \ \norm[c - \frac{1}{T} \sum_{t=1}^T m \bigl( I_t,J_t \bigr)] \ = 0
\qquad \quad \mbox{a.s.}
\]
That is, the first player has a strategy that ensures that the average of his vector-valued payoffs
converges to the set $C$.

For closed convex sets $C$, there is a simple characterization of approachability that is a
direct consequence of the minimax theorem.

\begin{theorem}[Blackwell {\cite[Theorem~3]{Bla56}}]
\label{th:appr}
A closed convex set $C \subset \R^d$ is approachable if and only if
\[
\forall \, \bq \in \Delta(\mathcal{J}), \ \ \exists \, \bp \in \Delta(\mathcal{I}), \qquad \quad
m(\bp,\bq) \in C~.
\]
\end{theorem}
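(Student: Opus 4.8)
The plan is to prove the two implications separately, following Blackwell's original geometric argument. For \emph{necessity} ($\Rightarrow$), I would argue by contraposition. Suppose there is $\bq_0 \in \Delta(\mathcal{J})$ with $m(\bp,\bq_0) \notin C$ for every $\bp \in \Delta(\mathcal{I})$. The set $M_0 = \bigl\{\, m(\bp,\bq_0) : \bp \in \Delta(\mathcal{I}) \,\bigr\}$ is the image of the simplex $\Delta(\mathcal{I})$ under a linear map, hence compact and convex, and it is disjoint from the closed convex set $C$, so $\mathrm{dist}(M_0,C) = 2\eta > 0$. Let the second player draw $J_t$ i.i.d.\ from $\bq_0$. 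Conditionally on the past, the payoff $m(I_t,J_t)$ has expectation $m(\delta_{I_t},\bq_0) \in M_0$, so $\frac1T\sum_{t=1}^T m(\delta_{I_t},\bq_0) \in M_0$ by convexity, while the average of the bounded martingale increments $m(I_t,J_t) - m(\delta_{I_t},\bq_0)$ tends to $0$ almost surely; hence $\frac1T\sum_{t=1}^T m(I_t,J_t)$ converges to $M_0$ almost surely. Therefore the average payoff stays at distance at least $\eta$ from $C$ for all $T$ large enough, and $C$ is not approachable.

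For \emph{sufficiency} ($\Leftarrow$), which is the substantive direction, write $\bar m_T = \frac1T\sum_{t=1}^T m(I_t,J_t)$ for the average payoff and let $c_t$ be the Euclidean projection of $\bar m_t$ onto $C$ (it exists because $C$ is closed and, by the hypothesis, nonempty). The first player's strategy: at round $t+1$, if $\bar m_t \in C$ play arbitrarily, and otherwise play the mixed action $\bp_{t+1}$ supplied by the following claim. \emph{Claim.} There is $\bp_{t+1} \in \Delta(\mathcal{I})$ with $\bigl\langle m(\bp_{t+1},\bq) - c_t,\ \bar m_t - c_t \bigr\rangle \leq 0$ for every $\bq \in \Delta(\mathcal{J})$. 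To prove it, observe first that, since $c_t$ is the projection of $\bar m_t$ onto the convex set $C$, the halfspace $\bigl\{ x : \langle x - c_t,\ \bar m_t - c_t\rangle \leq 0 \bigr\}$ contains all of $C$. Now apply the minimax theorem to the map $(\bp,\bq) \mapsto \langle m(\bp,\bq) - c_t,\ \bar m_t - c_t\rangle$, which is affine in each argument over the compact convex simplices: for each fixed $\bq$ the hypothesis yields some $\bp^\star$ with $m(\bp^\star,\bq) \in C$, hence in that halfspace, so $\min_{\bp}\langle m(\bp,\bq) - c_t, \bar m_t - c_t\rangle \leq 0$; taking the maximum over $\bq$ and exchanging the min and the max produces a single $\bp_{t+1}$ that works against all $\bq$ simultaneously.

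It then remains to run the classical recursion. Starting from $\bar m_{t+1} - c_t = \frac{t}{t+1}(\bar m_t - c_t) + \frac1{t+1}\bigl( m(I_{t+1},J_{t+1}) - c_t \bigr)$ and $\mathrm{dist}(\bar m_{t+1},C) \leq \norm[\bar m_{t+1} - c_t]$, I would expand the square and take the conditional expectation given the past: the Claim makes the cross term nonpositive, and a uniform bound $B$ on $\norm[m(I_{t+1},J_{t+1}) - c_t]$ (finite because the payoffs and the projections $c_t$ remain in a bounded region) controls the remaining term, giving $(t+1)^2\,\mathbb{E}\bigl[\mathrm{dist}(\bar m_{t+1},C)^2\bigr] \leq t^2\,\mathbb{E}\bigl[\mathrm{dist}(\bar m_t,C)^2\bigr] + B^2$. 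Telescoping yields $\mathbb{E}\bigl[\mathrm{dist}(\bar m_T,C)^2\bigr] \leq B^2/T$, so $\mathrm{dist}(\bar m_T,C)\to 0$ in $L^2$; a standard Hoeffding--Azuma bound on the martingale part of the same recursion, together with the Borel--Cantelli lemma and the fact that $\bar m_T$ moves by $O(1/T)$ between successive rounds, upgrades this to the almost-sure convergence required by the definition of approachability.

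I expect the geometric Claim to be the only genuinely non-routine step, and hence the main obstacle. It is there that convexity of $C$ is actually used --- to ensure that $C$ lies in the halfspace separating it from $\bar m_t$ at the projection point $c_t$ --- and there that the minimax theorem is indispensable, since it converts the pointwise hypothesis ``for every $\bq$ there is a good response $\bp$'' into the existence of one mixed action $\bp_{t+1}$ that is simultaneously good against every, possibly randomized, move of Nature. The telescoping estimate and the passage from $L^2$ to almost-sure convergence are then routine.
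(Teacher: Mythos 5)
The paper states this theorem without proof, quoting it directly from Blackwell \cite[Theorem~3]{Bla56}, so there is no in-paper argument to compare against. Your proposal is a correct rendition of Blackwell's original proof --- contraposition via i.i.d.\ play of $\bq_0$ for necessity, and for sufficiency the projection-based strategy whose key step (the existence of a single $\bp_{t+1}$ good against all $\bq$, obtained by combining the separating-halfspace property of the projection with the minimax theorem) is exactly the argument the paper alludes to when it calls the characterization ``a direct consequence of the minimax theorem.''
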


\subsection{Application to the existence of an $\e$--calibrated forecaster.}

As indicated above, we equip $\cP$ with some calibration norm $\norm$ and
fix $\e > 0$; we then consider an associated $\e$--grid $\bigl\{ \bp_1,\ldots,\bp_{N_\e} \bigr\}$
in $\cP = \Delta(\cA)$.

\begin{theorem}
\label{th:main}
There exists an $\e$--calibrated forecaster which selects at every stage a distribution from this grid.
\end{theorem}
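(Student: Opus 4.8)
The plan is to apply Blackwell's approachability theorem (Theorem~\ref{th:appr}) to a carefully chosen vector-valued game and target set, so that a strategy approaching the target set translates directly into an $\e$--calibrated forecaster. The first player is the forecaster, whose action set is $\mathcal{I} = \{1,\ldots,N_\e\}$, corresponding to the choice of a grid point $\bp_k$; the second player is Nature, with action set $\mathcal{J} = \cA$, corresponding to the choice of outcome $a$. Looking at the $\e$--calibration criterion~\eqref{def:epscal}, the natural reward vector must keep track, separately for each grid point $k$, of the sum $\ind_{\{K_t = k\}}(\bp_k - \delta_{a_t})$. I would therefore take the reward space to be $\R^d$ with $d = N_\e \times A$, viewing a reward vector as an $N_\e$--tuple of vectors in $\R^A$, and define
\[
m(k, a) = \Bigl( \uz, \ldots, \uz, \ \bp_k - \delta_a, \ \uz, \ldots, \uz \Bigr),
\]
where the nonzero block sits in position $k$. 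With this definition, the average payoff after $T$ rounds is exactly the $N_\e$--tuple whose $k$-th block is $\frac{1}{T}\sum_{t=1}^T \ind_{\{K_t=k\}}(\bp_k - \delta_{a_t})$.

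The target set should be chosen so that membership in it forces the $\ell^1$--type sum in~\eqref{def:epscal} to be at most $\e$. I would take
\[
C = \Bigl\{ (x_1,\ldots,x_{N_\e}) \in \bigl(\R^A\bigr)^{N_\e} \ : \ \sum_{k=1}^{N_\e} \norm[x_k] \leq \e \Bigr\},
\]
which is clearly closed and convex (it is a norm ball for the norm $\sum_k \norm[x_k]$ on $\R^d$). By the definition of approachability, if $C$ is approachable then the forecaster has a strategy guaranteeing that the average payoff vector converges to $C$, which by continuity of the map $(x_k) \mapsto \sum_k \norm[x_k]$ yields precisely~\eqref{def:epscal}; and the forecaster only ever plays grid points by construction of $\mathcal{I}$. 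So the whole proof reduces to verifying the Blackwell condition: for every $\bq \in \Delta(\cA)$ there exists $\bp \in \Delta(\mathcal{I})$ with $m(\bp,\bq) \in C$.

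The key step, and the only place where any real idea is needed, is this verification. Given a mixed action $\bq \in \Delta(\cA)$ of Nature — which is itself a probability distribution over outcomes, hence an element of $\cP$ — I would let the forecaster play the pure action $k^\star$ such that the grid point $\bp_{k^\star}$ is within $\e$ of $\bq$ in calibration norm; such a $k^\star$ exists precisely because $\{\bp_1,\ldots,\bp_{N_\e}\}$ is an $\e$--grid, i.e., the balls of radius $\e$ around the grid points cover $\cP$. Then $m(\bp,\bq)$, with $\bp = \delta_{k^\star} \in \Delta(\mathcal{I})$, has all blocks zero except the $k^\star$-th, which equals $\bp_{k^\star} - \sum_{a} \bq(a)\,\delta_a = \bp_{k^\star} - \bq$ (using the linear extension of $m$ and $\sum_a \bq(a)\delta_a = \bq$). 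Hence $\sum_k \norm[\cdot] = \norm[\bp_{k^\star} - \bq] \leq \e$, so $m(\bp,\bq) \in C$, and Theorem~\ref{th:appr} applies.

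The main obstacle is conceptual rather than technical: one has to spot that the second player's mixed action $\bq$ lives in the same space $\cP$ that is being discretized, so that the grid covering property is exactly what makes the Blackwell condition hold — matching $\bq$ to its nearest grid point. Once the game, reward function, and target set are set up correctly, every remaining step (closedness and convexity of $C$, linearity of $m$, the identity $\sum_a \bq(a)\delta_a = \bq$, and continuity of the norm-sum) is routine. A minor point to state cleanly is that approachability of $C$ as defined gives convergence of the average reward to $C$, and $\inf_{c\in C}\norm[\bar m_T - c] \to 0$ together with the structure of $C$ gives the $\limsup \le \e$ in~\eqref{def:epscal}; this is immediate but worth a sentence.
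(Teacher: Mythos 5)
Your proposal is correct and follows exactly the paper's own argument: same action sets, same block-structured payoff function $m(k,a)$, same target set $C$ defined by $\sum_k \norm[x_k] \leq \e$, and the same verification of the Blackwell condition by matching Nature's mixed action $\bq \in \cP$ to a grid point within $\e$. The only (welcome) addition is your explicit remark that convergence to $C$ yields the $\limsup \leq \e$ in~(\ref{def:epscal}) via continuity of the norm-sum, which the paper leaves implicit.
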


\begin{proof}
We apply the results on approachability recalled above.
To that end, we consider in our setting the action sets
$\mathcal{I} = \{ 1, \ldots, N_\e \}$ for the first player
and $\mathcal{J} = \cA$ for the second player.

We define the vector-valued payoff function as follows; it takes values in $\R^{A N_\e}$.
For all $k \in \{ 1,\ldots,N_\e \}$ and $a \in \cA$,
\[
m(k,a) = \bigl( \uz, \, \ldots, \, \uz, \,\, \bp_k - \delta_a, \,\, \uz, \, \ldots, \, \uz \bigr)~,
\]
which is a vector of $N_\e$ elements of $\R^A$ composed by $N_\e-1$
occurrences of the zero element $\uz \in \R^A$ and one non-zero element, located in the $k$--th position
and given by the difference of probability distributions $\bp_k - \delta_a$.

We now define the target set $C$ as the following subset of the
$\e$--ball around $\bigl( \uz, \, \ldots, \, \uz \bigr)$ for the calibration norm $\norm$.
We write $(A N_\e)$--dimensional vectors of $\R^{A N_\e}$ as $N_\e$--dimensional
vectors with components in $\R^A$, i.e., for all $\uuline{x} \in \R^{A N_\e}$,
\[
\uuline{x} = \bigl( \underline{x}_1, \, \ldots, \underline{x}_{N_\e} \bigr)~,
\]
where $\underline{x}_k \in \R^A$ for all $k \in \{ 1,\ldots,N_\e \}$.
Then,
\[
C = \left\{ \uuline{x} \in \R^{A N_\e} : \ \sum_{k=1}^{N_\e} \norm[\underline{x}_k] \,\, \leq \e \right\}~.
\]
Note that $C$ is a closed convex set.

The condition (\ref{def:epscal}) of $\e$--calibration can be rewritten as follows: the sequence of the vector-valued rewards
\[
\overline{m}_T \stackrel{\mbox{\scriptsize def}}{=} \frac{1}{T} \sum_{t=1}^T m \bigl( K_t,a_t \bigr)
= \left( \frac{1}{T} \sum_{t=1}^T \ind_{ \{ K_t = 1 \} }
\bigl( \bp_1 - \delta_{a_t} \bigr), \,\, \ldots, \,\,
\frac{1}{T} \sum_{t=1}^T \ind_{ \{ K_t = N_\e \} }
\bigl( \bp_{N_\e} - \delta_{a_t} \bigr) \right)
\]
converges to the set $C$ almost surely.

The existence of an $\e$--calibrated forecaster is thus equivalent to the
approachability of $C$, which we now prove by showing that the
characterization provided by Theorem~\ref{th:appr} is satisfied.
Let $\bq \in \Delta(\mathcal{J}) = \cP$.
By construction, there exists $k \in \{ 1,\ldots,N_\e \}$ such that
$\norm[\bp_k - \bq] \leq \e$ and thus
\[
m(k,\bq) \in C~.
\]
(Here, the distribution $\bp$ of the approachability theorem can be taken as the
Dirac distribution $\delta_k$.)
\end{proof}

\subsection{Computation of the exhibited $\e$--calibrated forecaster.}
\label{sec:cplx}

The proof of the approachability theorem gives rise to an implicit
strategy, as indicated in Blackwell~\cite{Bla56}.
We denote here by $\Pi_C$ the projection in $\ell^2$--norm onto $C$.

At each round $t \geq 2$ and with the notations above, the forecaster should
pick his action $K_t$ at random according to a distribution $\psi_t = \bigl( \psi_{t,1}, \ldots,
\psi_{t,N_\e} \bigr)$ on $\bigl\{ 1,\ldots,N_\e \bigr\}$ such that
\begin{equation}
\label{eq:Blk}
\forall \, a \in \cA, \quad \qquad
\Bigl( \overline{m}_{t-1} - \Pi_C \bigl( \overline{m}_{t-1} \bigr) \Bigr) \,\cdot\,
\Bigl( m \bigl( \psi_t, \, a \bigr) - \Pi_C \bigl( \overline{m}_{t-1} \bigr) \Bigr) \leq 0~,
\end{equation}
where $\,\cdot\,$ denotes the inner product in $\R^{A N_\e}$.
The proof of Theorem~\ref{th:appr} (see Blackwell~\cite{Bla56}) shows that such a distribution $\psi_t$ indeed exists;
the question is how to efficiently compute it.
To do so, we first need to compute the projection
$\Pi_C \bigl( \overline{m}_{t-1} \bigr)$ of $\overline{m}_{t-1}$.

We address the two computational issues separately.
We first indicate how to find the projection efficiently and then
explain how to find the distribution $\psi_t$ based on the knowledge of this projection.

\subsubsection{Projecting onto $C$.}

We need to find the closest point in $C$ to $\overline{m}_{t-1}$.
Since $C$ is convex and the $\ell^2$--norm is convex, we have to deal with
a minimization problem of a convex function over a convex set.
Since answering the question whether a given point is in $C$ or not can be done in time linear in $A N_\e$,
the projection problem can be solved (approximately) in time polynomial in $A N_\e$.

\bigskip
Now, for the special case where the calibration norm is the $\ell^1$--norm $\norm_1$,
we can do much better.
For $i \in \bigl\{ 1, \ldots, A N_{\e} \bigr\}$, we denote by $s_{i,t-1} \in \{ -1,1\}$
the sign of the $i$--th component $\overline{m}_{i,t-1}$
of the vector $\overline{m}_{t-1}$. (The value of the sign function
at $x$ is arbitrary at $x=0$, equal to $-1$ when $x <0$ and to $1$ when $x >0$.) Then,
$\Pi_C \bigl( \overline{m}_{t-1} \bigr)$ is the solution of the following optimization problem,
where the unknown is $\uuline{y} = \bigl( y_1,\ldots,y_{A N_\e} \bigr)$:
\begin{align}
\nonumber
\min \quad & \bigl\| \uuline{y} - \overline{m}_{t-1} \bigr\|_2^2 \vspace{.2cm} \\
\nonumber
\mbox{such that} \ \ & \left\{
\begin{array}{cc}
\displaystyle{\sum_{i=1}^{A N_\e} y_i \, s_{i,t-1} \leq \e} & \vspace{.15cm}\\
  \quad y_i \, s_{i,t-1} \geq 0~, & \forall \, i \in \bigl\{ 1, \ldots, A N_{\e} \bigr\}~.
\end{array}
\right.
\end{align}
It can be easily shown (as in Gafni and Bertsekas~\cite{GafniBertsekas84} or by an immediate adaptation of
Palomar~\cite[Lemma~1]{Palomar05})
that the optimal solution is unique; it is given by $\uuline{y}(\mu^*)$ where
for all $\mu \geq 0$,
\[
\uuline{y}(\mu) = s_{i,t-1} \, \bigl( s_{i,t-1} \, \overline{m}_{i,t-1} - \mu \bigr)^+
\]
and $\mu^*$ is chosen as the minimum nonnegative value such that
$\sum_i y_i(\mu) \, s_{i,t-1} \leq \e$.
(Note that if $\mu^* > 0$ then $\sum_i y_i(\mu^*) \, s_{i,t-1} = \e$.) Finding $\mu^*$ can be done
by a binary search to an arbitrary precision.

In conclusion, when the calibration norm is the $\ell^1$--norm $\norm_1$, projecting onto $C$ can be done in linear time in $A N_{\e}$ to a desired
precision $\delta$ with complexity that depends on $\delta$ like $\log(1/\delta)$.

\subsection{Finding the optimal distribution $\psi_t$ in (\ref{eq:Blk}).}

The question that has to be resolved is therefore how to find $\psi_t$ that satisfies condition~(\ref{eq:Blk}).
Since we know that such a $\psi_t$ exists,
it suffices, for instance, to compute an element of
\[
\mathop{\mathrm{argmin}}_{\psi} \,\, \max_{a \in \cA} \,\,
\Bigl( \overline{m}_{t-1} - \Pi_C \bigl( \overline{m}_{t-1} \bigr) \Bigr) \,\cdot\, m ( \psi, \, a )
= \mathop{\mathrm{argmin}}_{\psi} \,\, \max_{a \in \cA} \,\,
\sum_{k=1}^{N_\e} \, \psi_k \, \gamma_{k,a,t-1}
\]
where we denoted $\gamma_{k,a,t-1} = \Bigl( \overline{m}_{t-1} - \Pi_C \bigl( \overline{m}_{t-1} \bigr) \Bigr)
\,\cdot\, m ( k, \, a )$.

This can be done efficiently by linear programming leading to a polynomial complexity in $N_\e$ and $A$.

\bigskip
However, if instead of solving the minimax problem exactly we are satisfied with solving it approximately,
i.e., allowing a small violation $\delta > 0$ in each of the $A$ constraints given by (\ref{eq:Blk}),
we can use the multiplicative weights algorithm as explained in Freund and Schapire~\cite{FrSc99}; see also
Cesa-Bianchi and Lugosi~\cite[Section~7.2]{CeLu06}.
The complexity of such a solution would be
\[
O \! \left(\frac{A N_\e}{\delta^2} \ln N_\e \right)~,
\]
since $(\ln N_{\e})/\delta^2$ steps of complexity $A N_{\e}$ each have to be performed.

The proof of Blackwell's approachability theorem shows that in this case
the sequence of the average payoff vectors $\overline{m}_{t}$
converges rather to the $\sqrt{\delta}$--expansion (in $\ell^2$--norm) of $C$;
it is easy\footnote{It suffices to note that for all vectors $\Delta$ of a finite-dimensional
space, one has $\| \Delta \|_\infty \leq \| \Delta \|_2$,
so that the inequality
$\| \Delta \|_2 \leq \sqrt{\| \Delta \|_\infty \, \| \Delta \|_1}$
yields
$\sqrt{\| \Delta \|_2} \leq \| \Delta \|_1$.
}
to see that the latter is included in the
$\delta$--expansion (in $\ell^1$--norm) of $C$.

Putting all things together and taking the $\ell^1$--norm $\norm_1$ as the calibration norm
(in particular, to define $C$), we can find a $2\e$--calibrated forecaster
whose complexity is of the order of $A N_\e \, \e^{-2} \log N_\e$ at each step.
Since $N_\e$ behaves like $\e^{-(A-1)}$ we have that the dependence of the
complexity per stage behaves like $\e^{-(A+1)}$ (ignoring multiplicative and logarithmic factors).
This implies a polynomial dependence in $\e$ but an exponential dependence in $A$.

\begin{remark} 
It is worth noting that when choosing a solution $\psi_t$, it is not possible to replace $\psi_t$ with
its mean or with an element of $\bp_1,\bp_2,\ldots, \bp_{N_\e}$ that is close to its mean.
The reason is that this would give rise to a deterministic rule, which, as we mentioned
in Section~\ref{sec:caldef}, cannot be calibrated. The fact that we have to randomize rather than take the mean
is due to our construction of the vector-valued game; therein,
playing a mixed action $\psi_t$ over the $\bp_i$'s leads to a very different
vector-valued reward than playing the (element $\bp_k$ closest to the) mean of the mixed action.
This is because different indices of the $(A N_\e)$--dimensional space are involved.
\end{remark}

\section{Rates of convergence and construction of a calibrated forecaster.}

In this section we provide rates of convergence and discuss the construction of a
calibrated (rather than $\e$--calibrated) forecaster. We finally compare our results to
some existing calibrated forecasters in the literature.

The main result of this section is
providing rates of convergence for a calibrated forecaster in~(\ref{eq:ratescalibr}). To the best of our knowledge, this is the first rates results for calibration for an alphabet of size $A$ larger than 2.
For $A = 2$, (sub)optimal rates follow from the procedure of Foster and Vohra~\cite{FoVo98} as recalled in
Section~\ref{sec:FoVo}.

\subsection{Rates of convergence.}
\label{sec:rates}

Approachability theory provides uniform convergence rates of sequence of empirical
payoff vectors to the target set, see Cesa-Bianchi and Lugosi~\cite[Exercise 7.23]{CeLu06}. Formally, denoting
by $\norm_2$ the Euclidian $\ell^2$--norm in $\R^{A N_\e}$, it follows in our context that
there exists some absolute constant $\gamma$
(independent of $A$ and $N_\e$) such that for all strategies of Nature
and for all $T$, with probability $1-\delta$,
\[
\norm[ \overline{m}_T - \Pi_C \bigl( \overline{m}_T \bigr) ]_2 \leq \gamma \sqrt{\frac{\ln (1/\delta)}{T}}~.
\]
Here, it is crucial to state the convergence rates based on the Euclidian norm because of an
underlying martingale convergence argument in Hilbert spaces proved by Chen and White~\cite{ChWh96}.
The reason why the convergence rate here is independent of $A$ and $N_\e$
is that the payoff vectors $m(k,a)$ all have an Euclidian norm bounded by an absolute constant, e.g., 2;
this happens because most of their components are 0.

We now apply this result. However, we underline that the set $C$ can be defined by a different calibration norm $\norm$;
below, we will define it based on the $\ell^1$--norm, for instance. But the stated uniform
convergence rate can be used since, via a triangle inequality and an application of the
Cauchy-Schwarz inequality,
\[
\norm[ \overline{m}_T ]_1 \leq \norm[ \Pi_C \bigl( \overline{m}_T \bigr) ]_1 +
\norm[ \overline{m}_T - \Pi_C \bigl( \overline{m}_T \bigr) ]_1 \leq
\e + \sqrt{A N_\e} \, \norm[ \overline{m}_T - \Pi_C \bigl( \overline{m}_T \bigr) ]_2~.
\]

$N_\e$ is of the order of $\e^{-(A-1)}$; we let $\gamma'$ be an absolute constant such
that $N_\e \leq \gamma' \, \e^{-(A-1)}$ for all $\e \leq 1$ (say).
We therefore have proved that given $0 < \e \leq 1$, the forecaster defined in the previous section
is such that
for all strategies of Nature
and for all $T$, with probability $1-\delta$,
\[
\norm[ \overline{m}_T ]_1 =
\sum_{k=1}^{N_\e} \norm[\frac{1}{T} \sum_{t=1}^T \ind_{ \{ K_t = k \} }
\bigl( \bp_k - \delta_{a_t} \bigr)]_1 \leq \e + \gamma \gamma' \sqrt{A} \,
\sqrt{\frac{\ln (1/\delta)}{\e^{(A-1)} \, T}} \stackrel{\mbox{\scriptsize def}}{=} U_{\e,T,\delta}~.
\]
This high-probability bound is to be used below as the key ingredient to construct a calibrated
forecaster, i.e., a forecaster satisfying~(\ref{def:cal}).
Combining the Borel-Cantelli Lemma  with the bound above shows that the less ambitious goal~(\ref{def:epscal})
can be achieved.

\subsection{Construction of a calibrated forecaster.}
\label{sec:cstr}

We use a standard approach which is commonly known as the ``doubling trick," see, e.g., Cesa-Bianchi and 
Lugosi~\cite{CeLu06}. It consists of defining
a meta-forecaster that proceeds in regimes; regime $r$ (where $r \geq 1$)
lasts $T_r$ rounds and resorts for the forecasts
to an $\e_r$--calibrated forecaster, for some $\e_r > 0$ to be defined by the analysis.
We now show that for appropriate values of the $T_r$ and $\e_r$, the resulting meta-forecaster is calibrated
in the sense of (\ref{def:cal}), and even uniformly calibrated in the following sense,
where $\cB$ denotes the Borel sigma-algebra of $\cP$:
\begin{equation}
\label{def:unifcal}
\lim_{T \to +\infty} \ \ \sup_{B \in \cB} \ \norm[\frac{1}{T} \sum_{t=1}^T \ind_{ \{ P_t \in B \} }
\bigl( P_t - \delta_{a_t} \bigr)] \,\, = 0 \qquad \quad \mbox{a.s.}
\end{equation}
Of course, uniform calibration (\ref{def:unifcal}) implies calibration (\ref{def:cal}) via the choices
for $B$ given by $\e$--balls around probability distributions $\bp$.

For concreteness, we focus below on the $\ell^1$--calibration score.
\bigskip

Regimes are indexed by $r = 1,2,\ldots$ and the index of the regime corresponding to round $T$
is referred to as $R_T$. The set of the rounds within regime $r \leq R_T -1$
is called $\cT_r$; rounds in regime $R_T$ with index less than $T$ are
gathered in the set $\cT_{R_T}$ (we commit here an abuse of notations).
We denote by $\bp_{k,r}$, where $k \in \{ 1,\ldots,N_{\e_r} \}$, the finite $\e_r$--grid
considered in the $r$--th regime. By the triangle inequality satisfied by $\norm$, we first decompose
the quantity of interest according to the regimes and to the played points of the grids,
\[
\norm[\sum_{t=1}^T \ind_{ \{ P_t \in B \} }
\bigl( P_t - \delta_{a_t} \bigr)]_1 \leq \sum_{r=1}^{R_T}
\sum_{k = 1}^{N_{\e_r}} \ind_{ \{ \bp_{k,r} \in B \} } \norm[ \sum_{t \in \cT_r}
\ind_{ \{ K_t = k \} } \bigl( \bp_{k,r} - \delta_{a_t} \bigr)]_1~.
\]
We now substitue the uniform bound obtained in the previous section and get that with
probability $1 - (\delta_{1,T} + \ldots + \delta_{R_T,T}) \geq 1 - 1/T^2$,
\[
\sup_{B \in \cB} \norm[\frac{1}{T}\sum_{t=1}^T \ind_{ \{ P_t \in B \} }
\bigl( P_t - \delta_{a_t} \bigr)]_1 \leq
\frac{1}{T} \sum_{r=1}^{R_T} T_r \, U_{\e_r,T_r,\delta_{r,T}}~,
\]
where we defined $\delta_{r,T} = 1/(2^r T^2)$.

An application of the Borel-Cantelli Lemma and Cesaro's Lemma shows that for
suitable choices of a sequence $\e_r$ decreasing towards 0 and an increasing sequence
$T_r$ such that $\e_r^{A-1} \, T_r$ tends to infinity fast enough, one then gets
the desired convergence (\ref{def:unifcal}).
For instance, if $T_r = 2^r$,
and $\e_r$ is chosen such that
\[
\e_r \qquad \mbox{and} \qquad \sqrt{\frac{1}{\e_r^{\, (A-1)} \, T_r}}
\]
are of the same order of magnitude, e.g., $\e_r = 2^{-r/(A+1)}$, then
\begin{equation}
\label{eq:ratescalibr}
\limsup_{T \to \infty} \ \  \frac{T^{1/(A+1)}}{ \sqrt{\ln T}} \,\,
\sup_{B \in \cB} \norm[\frac{1}{T}\sum_{t=1}^T \ind_{ \{ P_t \in B \} }
\bigl( P_t - \delta_{a_t} \bigr)]_1 \ \leq \Gamma_A \qquad \quad \mbox{a.s.}~,
\end{equation}
where the constant $\Gamma_A$ depends only on $A$. As indicated above, to the best of our knowledge, this is the first
rates results for calibration for an alphabet of size $A$ larger than 2.

\subsection{Comparison to previous forecasters.}
\label{sec:Brier}

\subsubsection{$\ell^1$--calibration score.}

Foster~\cite{Fo99} first considered the $\ell^1$--calibration score in the context of the prediction
of binary outcomes only, i.e., when $A = 2$. The $\e$--calibrated forecaster he explicitly
exhibited has a computational complexity of the order of $1/\e$. He did not work out the convergence
rates but since his procedure is mostly a clever twist on our general procedure, they should be
similar to the ones we proved in Section~\ref{sec:rates}.

\subsubsection{Brier score.}
\label{sec:FoVo}

What follows is extracted from Foster and Vohra~\cite{FoVo98}; see also Cesa-Bianchi and Lugosi~\cite[Section 4.5]{CeLu06}.

Given an $\e$--grid over the simplex $\cP$, we define, for all $k \in \{ 1,\ldots,N_\e \}$,
the empirical distribution of the outcomes chosen by Nature at those rounds $t$ when
the forecaster used $\bp_k$,
\begin{numcases}{\rho_T(k) = }
\nonumber
\bp_k & if $\sum_{t=1}^T \ind_{ \{ K_t = k \} } = 0$, \vspace{.15cm} \\
\nonumber
\sum_{t=1}^T \ind_{ \{ K_t = k \} } \, \frac{1}{\sum_{t=1}^T \ind_{ \{ K_s = k \} }} \, \delta_{a_t}
& if $\sum_{t=1}^T \ind_{ \{ K_t = k \} } > 0$.
\end{numcases}

The classical Brier score can be shown in our setup to be equal to the following criterion:
\[
\sum_{k=1}^{N_\e} \norm[\rho_T(k) - \bp_k]_2^2 \, \left( \frac{1}{T} \sum_{t=1}^T \ind_{ \{ K_t = k \} } \right)~.
\]
Since for two probability distributions $\bp$ and $\bq$ of $\cP$, one always has
\[
\norm[\bp - \bq]_2^2 \leq 2 \norm[\bp - \bq]_1~,
\]
the Brier score can be seen to be upper bounded by twice the $\ell^1$--calibration score; it is thus a weaker
criterion.

\bigskip
Cesa-Bianchi and Lugosi~\cite[Section~4.5]{CeLu06} shows however that forecasters with Brier scores asymptotically smaller
than $\e$ can be the keystones to construct calibrated forecasters, in a way similar to the construction exhibited
in Section~\ref{sec:cstr}.

In the case $A = 2$, these forecasters essentially bound the Brier score, with probability at least $1-\delta$, by
a term that is of the order of
\[
\e + \frac{1}{\e} \sqrt{\frac{\ln(1/\e) + \ln(1/\delta)}{T}}~,
\]
which is worse than the rate we could exhibit in Section~\ref{sec:rates} for the
$\ell^1$--calibration score.

In addition, the computational complexity of the underlying procedure (based on the minimization of
internal regret) is of the order of $1/\e^2$ per stage and
thus is similar to the complexity $1/\e^{A+1} = 1/\e^2$ we derived in Section~\ref{sec:cplx}
for our new procedure.

\bigskip
The general case of $A \geq 3$ is briefly mentioned in Cesa-Bianchi and Lugosi~\cite[Section~4.5]{CeLu06} indicating
that the case of $A=2$ can be extended to $A \geq 3$ without further details.
As far as we can say,
the computational complexity of such an extension per step would be of the order of
$1/\e^{2(A-1)}$ versus $1/\e^{(A+1)}$ for the
approachability-based procedure we suggested above.
The convergence rates, for a straightforward extension, seem to be quite slow.
However, based on a draft of the present article, Perchet~\cite{Per10} recently
proposed a more efficient extension of the procedure of Foster and Vohra~\cite{FoVo98} and obtained the
same rates of convergence as in~(\ref{eq:ratescalibr}); he however did not work
out the complexity of his procedure, which seems to be similar to the one of our construction.

\section{Acknowledgments.}
Shie Mannor was partially supported by the ISF under contract 890015 and a Horev Fellowship.
Gilles Stoltz was partially supported by the French ``Agence Nationale pour la Recherche''
under grant JCJC06-137444 ``From applications to theory in learning and adaptive statistics''
and by the PASCAL Network of Excellence under EC grant {no.} 506778.

{\small
\bibliographystyle{amsplain}
\bibliography{Mannor-Stoltz--Geometric-Calibation--Final}
}

\end{document}